\pdfoutput=1

\documentclass[11pt]{article}

\usepackage[]{acl}

\usepackage{times}
\usepackage{latexsym}
\usepackage{amsthm}

\usepackage{algorithm}
\usepackage{algpseudocode}

\usepackage[T1]{fontenc}

\usepackage[utf8]{inputenc}

\usepackage{microtype}

\usepackage{inconsolata}
\usepackage{amsmath}
\usepackage{graphicx}
\usepackage{booktabs}
\usepackage{enumitem}
\usepackage{multirow}
\usepackage{xspace}

\newcommand{\ours}{\textsc{MinPrompt}\xspace}

\usepackage{balance}

\newtheorem{theorem}{Theorem}
\newtheorem{assumption}{Assumption}

%
%

\title{\ours: Graph-based Minimal Prompt Data Augmentation\\ for Few-shot Question Answering}



\author{
        \textbf{Xiusi Chen}$^1$ \ \ 
        \textbf{Jyun-Yu Jiang}$^2$ \ \ 
        \textbf{Wei-Cheng Chang}$^2$ \ \ \\
        \textbf{Cho-Jui Hsieh}$^1$ \ \ 
        \textbf{Hsiang-Fu Yu}$^2$ \ \ 
        \textbf{Wei Wang}$^1$
       \\ 
  University of California, Los Angeles$^1$ \ \ \ \ \ Amazon Search$^2$
  \\
  {\tt \{xchen,chohsieh,weiwang\}@cs.ucla.edu} \\ 
  {\tt \{jyunyu.jiang,weicheng.cmu,rofu.yu\}@gmail.com}
}

\begin{document}
\maketitle
\begin{abstract}
Recent advances in few-shot question answering (QA) mostly rely on the power of pre-trained large language models (LLMs) and fine-tuning in specific settings. Although the pre-training stage has already equipped LLMs with powerful reasoning capabilities, LLMs still need to be fine-tuned to adapt to specific domains to achieve the best results. In this paper, we propose to select the most informative data for fine-tuning, thereby improving the efficiency of the fine-tuning process with comparative or even better accuracy on the open-domain QA task.
We present \ours, a minimal data augmentation framework for open-domain QA based on an approximate graph algorithm and unsupervised question generation. We transform the raw text into a graph structure to build connections between different factual sentences, then apply graph algorithms to identify the minimal set of sentences needed to cover the most information in the raw text. We then generate QA pairs based on the identified sentence subset and train the model on the selected sentences to obtain the final model.
Empirical results on several benchmark datasets and theoretical analysis show that \ours is able to achieve comparable or better results than baselines with a high degree of efficiency, bringing consistent improvements in F-1 scores. 
\end{abstract}

\section{Introduction}

Question answering (QA) provides accurate responses to a series of questions based on given narrative contexts. Its diverse applications extend to areas such as chatbots~\cite{yang2019end}, dialogue systems~\cite{burtsev2018deeppavlov}, and instant information retrieval~\cite{esteva2021covid}, making it a key pursuit in the field of natural language processing (NLP). Supervised learning has traditionally been the approach for developing efficient QA systems that deliver commendable results~\cite{chen2024iteralign,tian2024tinyllm}. However, this method is intrinsically restricted by its reliance on a large set of annotated QA training examples, which becomes problematic due to the substantial cost associated with acquiring expert-level annotations.

Our research focuses on the few-shot QA task, an effort to address the QA challenge with the presence of only a limited number of training examples. The prevalent approaches under the few-shot setting either introduce a new task and pre-train an extensive language model from scratch~\cite{ram2021few}, or they fine-tune an already pre-trained model on the given training examples~\cite{chada2021fewshotqa,tian2024graph}. The fine-tuning stage is crucial in the sense that it stimulates the power of the LLMs obtained during the pre-training stage and makes the model align with the input/output distribution of a certain domain or dataset. However, with an increasing data size for fine-tuning, the training duration increases accordingly, which is undesirable, especially when the model size is also large~\cite{openai2023gpt}. As such, the importance of minimal data augmentation cannot be understated. The fine-tuning data, often a limited resource in our consideration (up to 128 shots), is directly used to adjust the parameters of a pre-trained model to enhance performance on the downstream task. The data is usually labeled by domain experts and thus could be time-consuming to obtain in large quantities. On the other hand, augmented data represents a broader dataset, generated in an unsupervised manner by converting statements into question-answer pairs. In QA tasks, it is vital for a model to be exposed to a diverse range of questions, answers, and contexts to develop a robust understanding of the language and the task at hand. However, not all parts of the training data hold equal relevance or significance for the model's learning process. Some parts may contain more valuable information or more complex language structures that the model needs to understand to improve its performance. Consequently, identifying and augmenting these critical portions of the training data could substantially enhance the model's capacity to answer questions accurately and comprehensively.

To address the above challenges, we present \ours, which consists of the following three modules: (1) A \textbf{sentence graph construction module} that leverages sentence graph representation to structurize the raw text. Each node in the graph symbolizes a sentence, while edges illustrate the shared entities between sentences. This sentence graph effectively encapsulates the complex interconnections between various textual elements; (2) A \textbf{data selection module} that features an approximate minimal dominating set algorithm. The algorithm is applied to the sentence graph to identify the smallest set of sentences to cover all shared entities. This module ensures efficient use of computational resources, reduces the risk of overfitting, and enhances the model's generalization ability, resulting in an overall improvement in QA performance; and (3) A \textbf{question generation module} that transforms the selected plain factual sentences into QA pairs. The synthesized QA pairs are further turned into prompts, providing a condensed, yet comprehensive representation of the text. The generated prompts serve as high-quality, information-rich training instances for the QA model. This model trained on the compact and meaningful prompts is then capable of generating accurate answers to the posed questions, all without requiring any additional explicit supervision.

In summary, our contributions are as follows:
\begin{itemize}[leftmargin=*]
    \item We propose to study minimal data augmentation for effective and efficient few-shot QA
    \item We introduce \ours, a minimal data augmentation framework that uses a graph-based algorithm and unsupervised question generation to synthesize the most informative QA training samples out of the raw text.
    \item We conduct extensive experiments on publicly accessible benchmarks to validate the effectiveness of \ours, and observe a solid improvement over competitive compared methods. Beyond that, we also study the necessity of different parts of the model.
\end{itemize}

\section{Related Work}
\noindent \textbf{Question generation.} 
\citet{chen2019reinforcement} presented an answer-aware question generation (QG) model that employs reinforcement learning for improved question quality. The model incorporates a coverage mechanism to alleviate the common issue of answer-related content being left out from the generated questions.
\citet{ma2020improving} developed a more sophisticated approach to answer-aware question generation. Their model uses sentence-level semantic matching and answer position inferring within a sequence-to-sequence framework, resulting in higher-quality questions.
\citet{do2023modeling} proposed a two-stage framework for Conversational Question Generation (CQG). It selects sentences from a semantic graph to pick up coherent topics and then uses a classifier to determine the answer type of the question. Their approach produces more natural dialogues, as real-life interlocutors often discuss relevant content that is non-sequential. 
\citet{mohammadshahi2022rquge} introduces RQUGE, a novel metric for assessing the quality of automatically generated questions. Traditional methods may unfairly penalize valid questions that don't mirror reference questions closely. RQUGE overcomes these issues by evaluating on the basis of the answerability of a question given the context. Utilizing pre-trained models for its QA scorer modules, RQUGE does not require additional training. The paper presents evidence of RQUGE's high correlation with human judgment and robustness against adversarial corruption.


\begin{figure*}[!t]
    \centering
    \includegraphics[width=\linewidth]{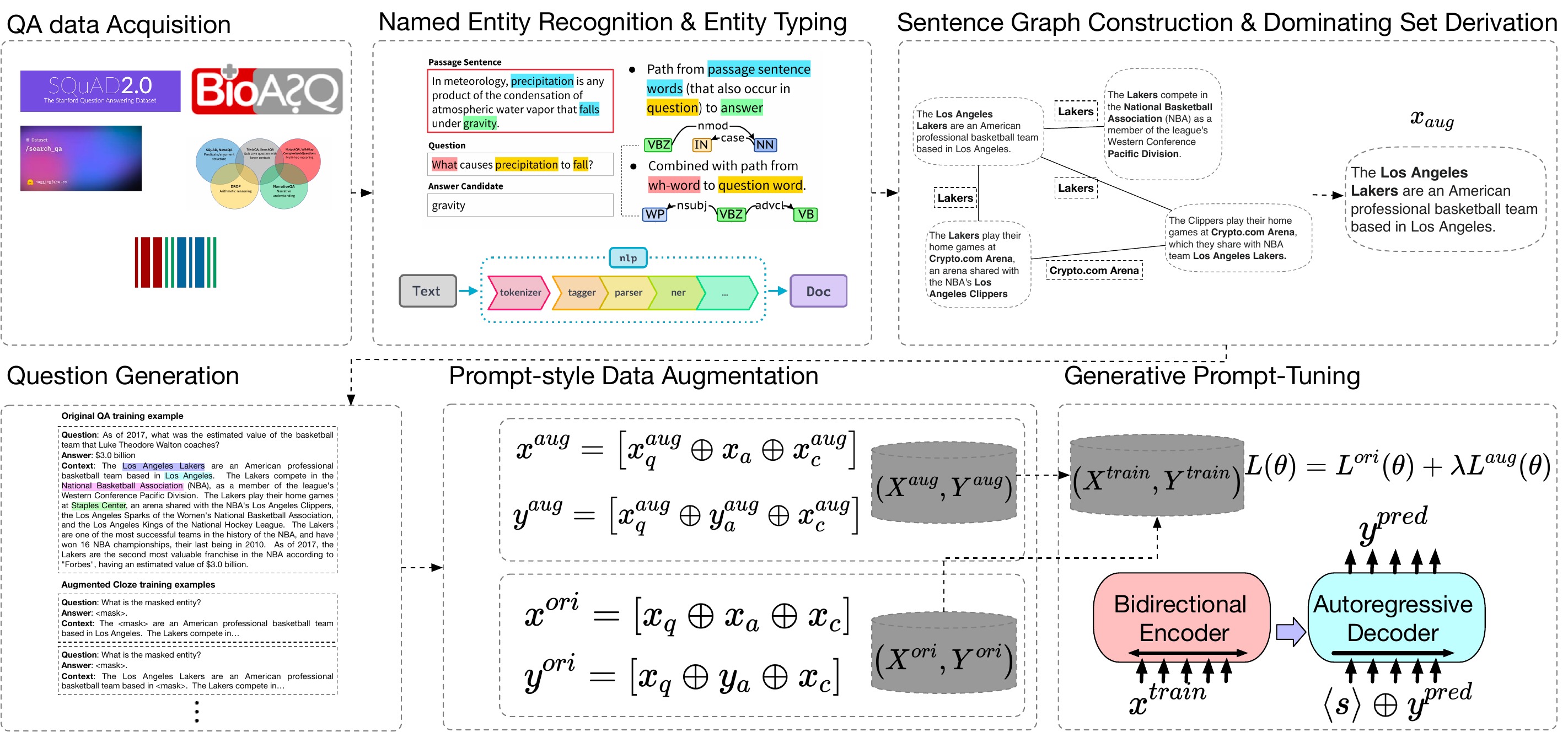}
    \caption{\textbf{Framework overview for \ours.}}
    \label{fig:overview}
\end{figure*}
\noindent \textbf{Few-shot QA.} Previous research in QA has mainly focused on either reusing pre-trained language models (PLMs) ~\cite{Lan2020ALBERT:,joshi2020spanbert} or training a model from scratch using synthetic QA data~\cite{puri2020training,lewis2019unsupervised,alberti2019synthetic}. However, both approaches require a large amount of annotated data from the downstream QA task to fine-tune the models, which can be impractical in real-world scenarios. To address this problem, several recent approaches have been developed that allow the model to adapt to the downstream task with only a small amount of annotated data~\cite{ram2021few,chada2021fewshotqa}. For example, \citet{ram2021few} proposed a pretraining scheme tailored for QA tasks by designing a recurring span selection objective that aligns with the common objective in extractive QA tasks. \citet{chada2021fewshotqa} proposed a framework called FewshotQA, which leverages the capacity of existing PLMs by constructing a QA-style prompt that casts the QA problem as a text generation problem, specifically by concatenating the question and a mask token representing the answer span. This approach aims to save pretraining the model on a large-scale corpus.
In contrast to these previous studies, this paper proposes to focus on identifying and leveraging more relevant information from the context data in addition to the annotated QA pairs to fine-tune the model in a few-shot setting.

\section{\ours: Graph-based Prompt Data Augmentation for Few-shot QA}
As shown in Figure~\ref{fig:overview}, our overall framework, \ours, is designed to extract the most semantically rich and factually dense sentences to serve as candidates for conversion into a prompt tuning QA dataset. This process is guided by the principal intuition that the most informative sentences are those that encompass facts or declarations concerning a greater number of entities. Hence, these high-impact sentences should ideally cite more entities within their purview.
To implement this, we start by extracting the co-reference of entities across sentences. 
Essentially, it allows us to map the discourse in a way that allows us to understand which sentences are speaking about the same entities. 
Next, we construct a graph to depict the higher-order coreference relationships. In this graph, the sentences serve as nodes, and sentences are connected if they mention the same entity. This representation allows us to establish and understand the intricate network of relationships between sentences and the entities they mention. 
Employing graph-based algorithms, we are then able to identify and extract the most informative sentences. These are typically sentences that have a high degree of connectivity in the graph, indicating that they mention or discuss a larger number of entities.
We then transform these selected sentences into a fine-tuning dataset. The transformation process entails restructuring the sentences to meet the format requirements of a QA dataset, which generally involves turning declarative sentences into question-and-answer pairs. 
This method thus combines insights from computational linguistics and graph theory to achieve its goal of creating a high-quality fine-tuning dataset for QA tasks. The approach ensures that the dataset is not only rich in informative sentences, but also maps intricate entity relationships, thus providing a comprehensive context for each question and answer pair. This context helps in the training of more robust and nuanced QA systems.

\subsection{Named Entity Recognition \& Entity Typing}
We use the entities as the bridge to build connections between all the factual sentences. We first conduct named entity recognition (NER) on the raw text to extract all the entity mentions along with their types. 
For the purpose of unsupervised QA data generation in our setting, the key lies in generating the questions given the raw text and the extracted entities (as answers). The most straightforward way to generate questions is to convert factual sentences into cloze questions~\cite{chen2023gotta}. Creating a conventional cloze question involves extracting the original sentence containing the answer from the context and replacing the answer with a chosen token. However, training a model on these data primarily imparts text-matching and fill-in-the-blank skills, while offering minimal generalizability. As a result, we opt for a retrieval-based method to procure a sentence akin to the one containing the answer and subsequently use this to formulate a question. This has been evidenced in the work by \cite{lewis-etal-2019-unsupervised} and further affirmed by our preliminary experiments. Our initial step involves indexing all sentences from a Wikipedia dump using the ElasticSearch search engine. Named entities were extracted from each sentence within the Wikipedia corpus as well as from the sentences utilized as queries. We presupposed access to a named-entity recognition system and leveraged the spaCy\footnote{\url{https://spacy.io}} NER pipeline for this work, which is proven effective in NER and entity typing. Subsequently, for a given context-answer pair, we queried the index. This query involved using the original context sentence to return a sentence that either (1) includes the answer, or (2) does not originate from the \emph{context}, thus discarding sentences with high similarity. Aside from guaranteeing that the retrieved sentence and the query sentence share the answer entity, we require that at least one additional matching entity be present in both the query sentence and the entire context. Finally, these retrieved sentences were introduced into our sentence graph construction module.

\subsection{Sentence Graph Construction}
\begin{figure}
    \centering
    \includegraphics[width=\linewidth]{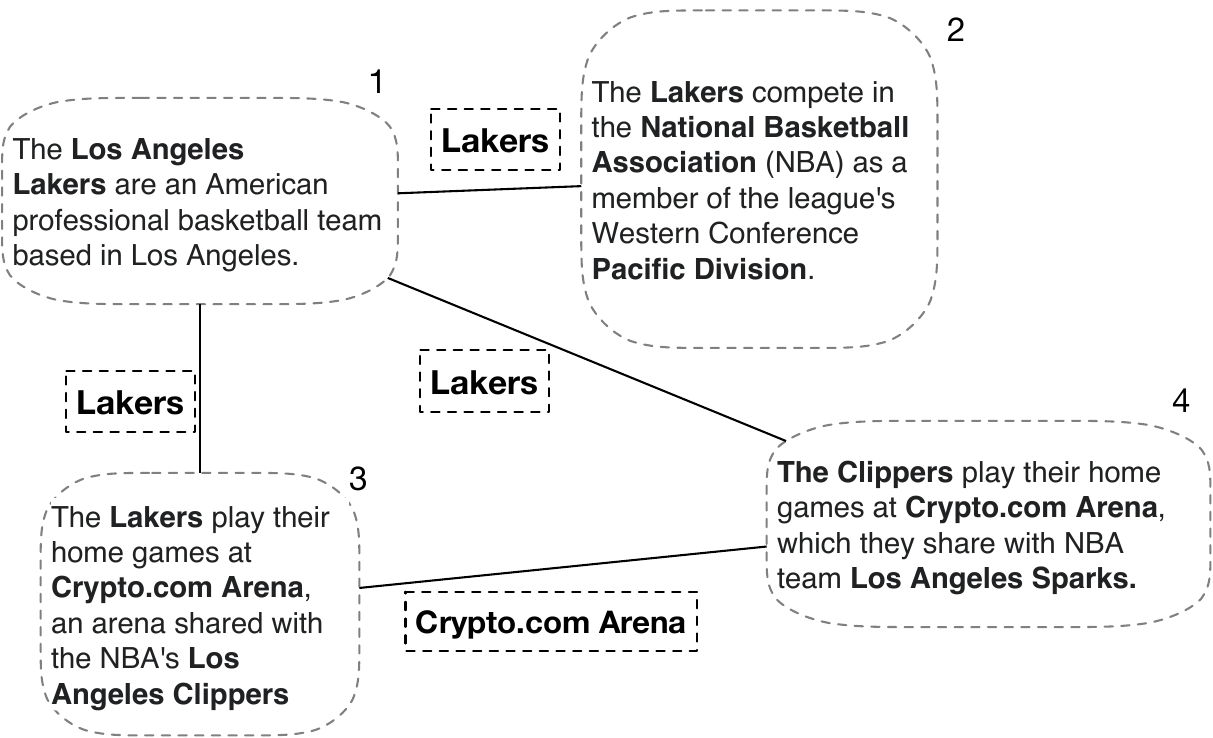}
    \caption{\textbf{Illustration of the Sentence graph.} In the sentence graph, nodes correspond to sentences and edges represent the coreference of entities across sentences. Sentences 1, 2 and 3 shares the entity \textit{Lakers} while sentence 4 shares the entity \textit{Crypto.com Arena} with sentence 3.}
    \label{fig:s_graph}
\end{figure}

As aforementioned, we construct the sentence graph to capture the semantic overlap of the factual sentences in the raw text. A proportion of the sentence graph is visualized in Figure~\ref{fig:s_graph}. Upon building the sentence graph, we aim at extracting the minimal sentence set that covers the most semantics in the whole graph. Now the question becomes how we can leverage the high-order co-reference relationship to reduce the size of the training data. To dive deep into this question, we start by making the following assumption:
\begin{assumption} \label{assumption:graph}
Suppose two sentences in a sentence set $S$, $\{s_e, s_e^\prime\} \subset S$, mention the same entity $e$.
The quality of a QA model $M_S$ trained by a sentence set $S$ will be similar to the quality of the other model $M_{S^\prime}$ trained by the set $S^\prime = S - \{s_e\}$ because $s_e^\prime\in S^\prime$ still cover the similar topics and knowledge in $s_e$.
\end{assumption}

Based on Assumption~\ref{assumption:graph}, an intuitive idea of leveraging the sentence graph to effectively reduce the size of the training data is to find a minimal set of sentence nodes that can cover the whole sentence graph without losing the quality of the model.
In other words, the challenge can be reduced to finding the \emph{minimal dominating set}~\cite{allan1978domination} of the sentence graph.

\subsection{Minimal Dominating Set Approximation}

Unfortunately, finding the minimal dominating set is an NP-Complete problem~\cite{hedetniemi1991bibliography}, so it is extremely time-consuming to obtain the optimal minimal dominating set as training data.
Hence, an efficient approximation approach to derive a decent dominating set with few enough sentences is essential.
To address this challenge, we leverage a greedy algorithm as shown in Algorithm~\ref{alg:dominantingset} by iteratively choosing the node that can cover the most uncovered nodes.

\begin{algorithm}
\caption{ApproximateDominantingSet}\label{alg:dominantingset}
\begin{algorithmic}
\State $S \gets \emptyset$
\State Let $H$ be a priority queue
\State Add all nodes in $H$ with their node degrees
\While{$H$ is not empty}
    \State $v \gets H.\text{pop\_max}()$
    \State $S \gets S \bigcup \{v\}$
    \State Remove $v$ and its neighbors in $E$ from $H$
    \State Update degrees of the remaining nodes in $H$
\EndWhile\\
\Return $S$
\end{algorithmic}
\end{algorithm}


\noindent \textbf{Complexity Analysis.}
Here we analyze the complexity of Algorithm~\ref{alg:dominantingset}.
Suppose $V$ and $E$ are the numbers of nodes and edges.
For time complexity, the algorithm first spends $O(V\log V)$ time to establish the max heap. For each iteration, taking the node with the highest degree costs $O(1)$ with the priority queue. In total, we need to update the priority queue $O(E)$ times, where each update costs $O(\log V)$ time.
Hence, the total time complexity is $O(E\log V)$.
For space complexity, the additional space complexity is only $O(V)$ to record the current set of uncovered nodes and the max heap.

\noindent \textbf{Theoretical Analysis.}
We also conduct some theoretical analysis on Algorithm~\ref{alg:dominantingset}.
According to Theorem~\ref{thm:approx}, the quality of dominating set derived by Algorithm~\ref{alg:dominantingset} is guaranteed. 
\begin{theorem}\label{thm:approx}
Algorithm~\ref{alg:dominantingset} computes an $(\ln \Delta + 2)$-approximation of the optimal dominanting set. In other words, 
for the computed dominating set $S$ and an optimal dominating set $S^*$, we have
$$\frac{|S|}{|S^*|} \leq \ln \Delta + 2,$$
where $\Delta=\max_v d(v)$ is the maximal degree of $G$.
\end{theorem}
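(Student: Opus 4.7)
The plan is to recognize Algorithm~\ref{alg:dominantingset} as an instance of the greedy set cover algorithm and invoke the classical Johnson/Lov\'asz/Chv\'atal $H_k$-approximation bound. First I would reduce minimum dominating set to minimum set cover: for each $v \in V$, let $N[v] = \{v\} \cup N(v)$ denote the closed neighborhood. A set $S \subseteq V$ is a dominating set if and only if $\bigcup_{v \in S} N[v] = V$, so the minimum dominating set corresponds to a set cover instance with universe $V$, sets $\{N[v] : v \in V\}$, and maximum set size $\Delta + 1$.

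Next I would verify the correspondence between Algorithm~\ref{alg:dominantingset} and greedy set cover. In each iteration, the algorithm pops the node $v$ of maximum residual degree in $H$; after the removal of previously dominated vertices, this residual degree (plus one for $v$ itself) is exactly the number of still-uncovered elements that $N[v]$ would newly cover. Hence the priority-queue choice coincides with the greedy rule \emph{pick the set covering the most uncovered elements}, and the degree updates in $H$ are just a bookkeeping implementation of shrinking the remaining sets.

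Then I would apply the standard charging argument. When the greedy selects a vertex that newly covers $t$ elements, assign cost $1/t$ to each of those $t$ elements, so $|S|$ equals the total assigned cost. Fix any $v^* \in S^*$ and order the elements of $N[v^*]$ by the iteration in which the greedy first covers them; right before the $i$-th such element is covered, at least $i$ elements of $N[v^*]$ remain uncovered, so $v^*$ itself is an available candidate that would cover at least $i$ new elements, and the greedy therefore selects a set covering at least $i$ new elements. Thus the $i$-th element receives cost at most $1/i$, and the total cost charged to $N[v^*]$ is at most $H_{|N[v^*]|} \leq H_{\Delta+1}$. Since the union of $\{N[v^*] : v^* \in S^*\}$ is $V$, summing gives
\[
|S| \;\leq\; |S^*| \cdot H_{\Delta+1} \;\leq\; |S^*|\bigl(\ln(\Delta+1) + 1\bigr) \;\leq\; |S^*|(\ln \Delta + 2),
\]
using $\ln(\Delta+1) \leq \ln \Delta + 1$ for $\Delta \geq 1$, which yields the claim.

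I expect no step to pose a genuine obstacle, as all ingredients are textbook. The most delicate piece is the charging argument: the key subtlety is justifying that at the moment the $i$-th element of $N[v^*]$ gets covered, the greedy's choice must cover at least $i$ new elements \emph{because $v^*$ was a feasible candidate covering at least $i$ uncovered elements}, and this is what pins the per-element cost to $1/i$ and ultimately yields the harmonic bound.
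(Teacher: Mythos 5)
Your proposal is, at its core, the same argument the paper gives: the paper also charges each greedy pick a unit cost, spreads it as $1/t$ over the $t$ newly covered vertices, and bounds the charge accumulated inside each ``star'' of an optimal dominating set by the harmonic sum $H(\Delta+1) < \ln\Delta + 2$. Your reduction to set cover over closed neighborhoods $N[v]$ is just a cleaner, textbook packaging of that same charging argument (charging $N[v^*]$ for every $v^*\in S^*$ rather than a star decomposition only over-counts, which is harmless), so no genuinely different idea is involved. One cosmetic slip: if the elements of $N[v^*]$ are ordered by increasing time of coverage, then right before the $i$-th one is covered at least $|N[v^*]|-i+1$ (not $i$) of them remain uncovered, so that element is charged at most $1/(|N[v^*]|-i+1)$; the sum is the same harmonic number, so nothing changes.

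The one substantive point is the step you yourself flagged as delicate: ``$v^*$ itself is an available candidate.'' For Algorithm~\ref{alg:dominantingset} exactly as written this is not guaranteed, because the priority queue $H$ contains only vertices that are still uncovered --- when a vertex is selected, its neighbors are removed from $H$ --- so once $v^*$ has been dominated it can never be selected again even though many of its neighbors may still be uncovered, and the greedy choice at that moment need not cover as many new vertices as $N[v^*]$ still contains. This is not a formality: for two adjacent hubs $a,b$, each with $n$ private leaves, the restricted greedy can pick $a$, after which $b$ is removed from $H$ and all $n$ leaves of $b$ must be picked individually, giving $|S|=n+1$ against the optimum $\{a,b\}$, so the stated correspondence with greedy set cover (which keeps every set $N[v]$ available, i.e., every vertex as a candidate ranked by the number of newly covered vertices) is what the proof really needs. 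The paper's own proof makes the same silent identification (``by the greedy condition and the optimality of $v^\prime$''), so your write-up is no weaker than the paper's; but to make the argument airtight you should either state the algorithm so that covered vertices remain candidates scored by newly covered vertices, or explicitly address why the restriction is harmless in the setting considered.
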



\begin{proof}
Here we prove the theorem in an amortized way. 
Suppose each iteration costs 1 (i.e., contributing to the cardinality of the final dominating set).
Instead of letting the selected node takes all the cost, we amortize and distribute the cost among all newly covered nodes.

Assume $S^\prime$ is an optimal dominating set. 
By the definition of dominating set, we can assign each node in $V$ to exactly one neighboring node in $S^\prime$ so that the graph can be decomposed into several stars, where the center is a dominating node and non-dominating nodes are leaves.

Consider a certain star with a center $v^\prime \in S^\prime$ while choosing a node $u$ in Algorithm~\ref{alg:dominantingset}.
By the greedy condition and the optimality of $v^\prime$, after cost distribution, the charged cost of $u$ would be at most $d(v^\prime)$.
Also, after removing $u$, the degree of $v^\prime$ will be reduced by 1.
Following this process to iteratively select dominating nodes, the total amortized cost would be at most:

\begin{align*}
     \frac{1}{d(v^\prime) + 1}+\frac{1}{d(v^\prime) } +\cdots +\frac{1}{1} &= H(d(v^*)+1) \\
     &\leq  H(\Delta +1) \\
     & < \ln \Delta  + 2,
\end{align*}
where $\Delta$ is the maximal degree of the graph; $H(n)=\sum_{i-1}^n 1/i$.

\end{proof}

\subsection{Question Generation}
Our approach considers two question styles, including (1) generic cloze-style questions, wherein the answer is substituted by the token ``[MASK]", and (2) a templated question format termed "Wh+B+A+?" as well as its diverse ordering variations, as depicted in Figure \ref{fig:aug_data}. 
Given a retrieved sentence structured as \texttt{{[}Fragment\ A{]}\ {[}Answer{]}\ {[}Fragment\ B{]}}, the template "Wh + B + A +?" replaces the \emph{Answer} with a component Wh (for instance, what, who or where). This component is determined by the entity type of the \emph{Answer} and is placed at the beginning of the question. It is then followed by \texttt{Fragment\ B} and \texttt{Fragment\ A}. The selection of the wh-component involves sampling a bi-gram based on the likelihood of that particular bi-gram being connected with the named entity type of the answer. This likelihood is calculated from the named entity and questions bigram starters found in the SQuAD dataset. This information, while not leveraging the complete context-question-answer framework, can be considered as prior knowledge that does not disrupt the wholeness of our unsupervised methodology. It is also important to note that the choice of wh-component does not have a substantial impact on the results. Although we experimented with clause-based templates for this template-driven approach, we did not observe any significant differences in performance.
\begin{figure}[t]
    \centering
    \includegraphics[width=.9\linewidth]{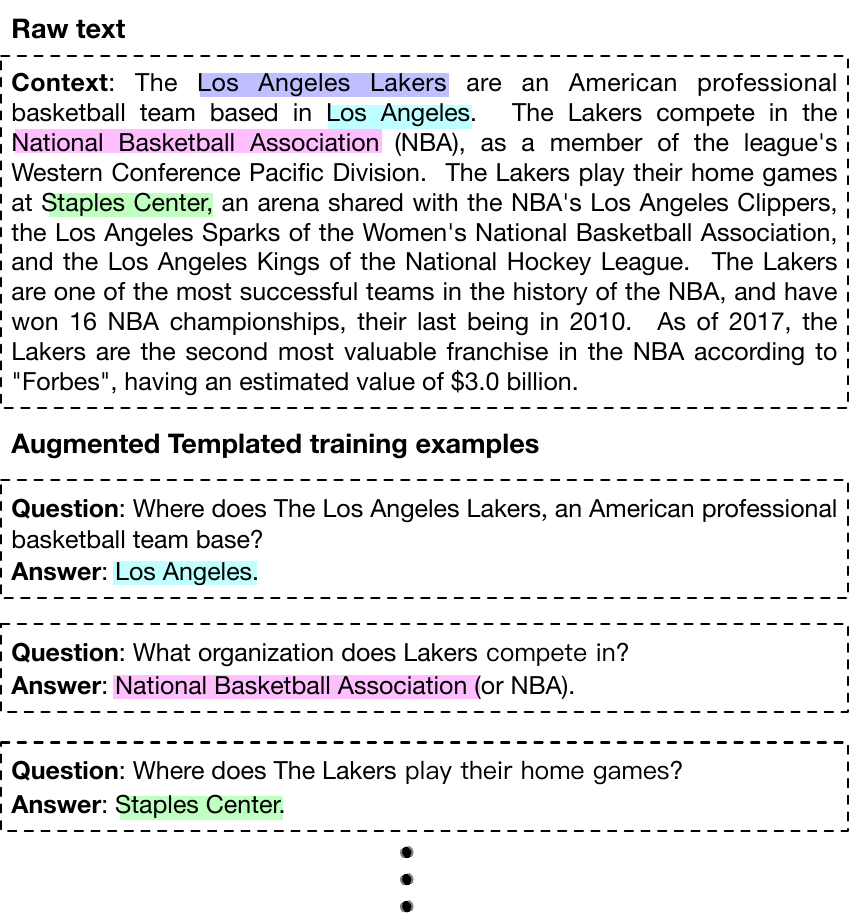}
    \caption{\textbf{Examples of generated questions.} When \ours runs into an $entity$ in the raw text during the question generation phase, it turns the factual sentence into a QA pair of $(question, entity)$, with the question type depending on the entity type.}
    \label{fig:aug_data}
\end{figure}
\subsection{Prompt-style Data Augmentation}
We extend the recent progress in prompt tuning to create augmented data for \ours. Specifically, we have formulated a template to enable QA input, designated as $x^{ori}$. The template is constructed as follows:
\begin{gather*}
    x_{q}= \emph{Question}: \mathbf{q} \\
    x_{a}= \emph{Answer}: {\rm <}\emph{mask}{\rm >} \\
    x_{c}= \emph{Context}: \mathbf{c} \\
    x^{ori}=\left[x_{q} \oplus x_{a} \oplus x_{c}\right]
\end{gather*}
Here, we formulate the labels $y$ as:   
\begin{gather*}
    y_{a}= \emph{Answer}: \mathbf{a}, \\
    y=\left[x_{q} \oplus y_{a} \oplus x_{c}\right],
\end{gather*}
where $\mathbf{q}$, $\mathbf{a}$, and $\mathbf{c}$ represent the query text, response text, and background context respectively, and $\oplus$ symbolizes string concatenation.


In the augmented QA data samples, we apply the masking to the chosen entity in $x_{c}$ to construct the context text for the augmented data $x_{c}^{\emph{aug}}$, along with the mask token in $x_{a}$. The specifics of an augmented data sample $(x^{\emph{aug}}, y^{\emph{aug}})$ are depicted in Figure~\ref{fig:aug_data}. Let the set of all training samples from original QA datasets and augmented QA pairs be denoted by $(X^{\emph{ori}}, Y^{\emph{ori}})$ and $(X^{\emph{aug}}, Y^{\emph{aug}})$ respectively. Thus, our entire training set $(X^{\emph{train}}, Y^{\emph{train}})$ comprises of both $(X^{\emph{ori}}, Y^{\emph{ori}})$ and $(X^{\emph{aug}}, Y^{\emph{aug}})$.

\subsection{Training}
One of the key benefits of harmonizing the augmented and original data lies in the ability of the model to effectively process both data types without any significant loss. Concisely, \ours derives a prediction utilizing an encoder-decoder model as 
\begin{equation}
    y^{\emph{pred}} = {\rm \emph{decoder}}_{\theta_{\emph{D}}}({\rm \emph{encoder}}_{\theta_{\emph{E}}}(x)), 
\end{equation}
where $\theta_{\emph{E}}$ and $\theta_{\emph{D}}$ represent learnable parameters, and $x \in X^{train}$ can be either an original or an augmented training sample.

The training objective of our system aims to maximize the log-likelihood of the text in the reference answer, denoted by $y \in Y^{train}$. The loss functions concerning the original samples and the augmented samples are expressed in the following equations:
\begin{align*}
L^{\emph{ori}}(\theta)\!&=\!\!\! 
\sum_{\left(x, y\right) \in\left(X^{\emph{ori}}, Y^{\emph{ori}}\right)} \!\!\log \left(\prod_{i=1}^{n} P\left(y_{i} \mid y_{<i}, x; \theta\right)\right) \\
L^{aug}(\theta)\!&= \!\!\!
\sum_{\left(x, y\right) \in\left(X^{\emph{aug}}, Y^{\emph{aug}}\right)} \!\!\log \left(\prod_{i=1}^{n} P\left(y_{i} \mid y_{<i}, x; \theta\right)\right)
\end{align*}
%
%
where $\theta = \{\theta_{D}, \theta_{E}\}$.
The overall loss function is the weighted average of two losses:
\begin{equation}
    L(\theta) = L^{\emph{ori}}(\theta) + \lambda L^{\emph{aug}}(\theta).
\end{equation}

We consider $\lambda >0$ to be a hyperparameter that establishes a balance between the few-shot QA training samples and the augmented QA samples.

\section{Experiments}
\label{sec:exp}

\begin{table*}[]
\centering
\resizebox{\linewidth}{!}{
\begin{tabular}{ccccccccc}
\toprule
\textbf{\# examples} & \textbf{SQuAD} & \textbf{TriviaQA} & \textbf{NQ}   & \textbf{NewsQA} & \textbf{SearchQA} & \textbf{HotpotQA} & \textbf{BioASQ} & \textbf{TextbookQA} \\ \midrule
\# nodes                                       & 104,160                                  & 123,183                                     & 418,049                               & 356,408                                   & 25,413                                      & 417,895                                     & 60,080                           & 30,723                               \\
\# edges                                       & 20,310,486                               & 36,716,957                                  & 408,935,741                           & 339,619,544                               & 13,425,062                                  & 766,206,565                                 & 6,821,645                        & 3,150,557                            \\
\# dominating set                              & 8,260                                    & 11,099                                      & 30,452                                & 24,015                                    & 1,518                                       & 34,830                                      & 4,480                            & 1,116                                \\
\textbf{\# training samples}                   & \textbf{17,409}                          & \textbf{24,091}                             & \textbf{48,213}                       & \textbf{32,391}                           & \textbf{4,509}                              & \textbf{116,385}                            & \textbf{6,884}                   & \textbf{1,505}                       \\ \bottomrule
\end{tabular}}
\caption{\textbf{Number of augmented training examples per dataset.} We construct one training example per entity extracted from the raw text of each QA dataset and use the \ours to produce augmented QA data.}
\label{tab:aug_data}
\end{table*}

\begin{table*}[t!]
\centering
\resizebox{\linewidth}{!}{
\begin{tabular}{l|cccccccc|c}
\toprule
\multicolumn{1}{c}{\textbf{Model}} & \multicolumn{1}{|c}{\textbf{SQuAD}}    & \multicolumn{1}{c}{\textbf{TriviaQA}} & \multicolumn{1}{c}{\textbf{NQ}}       & \multicolumn{1}{c}{\textbf{NewsQA}}   & \multicolumn{1}{c}{\textbf{SearchQA}} & \multicolumn{1}{c}{\textbf{HotpotQA}} & \multicolumn{1}{c}{\textbf{BioASQ}}   & \multicolumn{1}{c}{\textbf{TextbookQA}}   &\multicolumn{1}{|c}{\textbf{Average}}   \\ \toprule
\multicolumn{9}{l}{16 Examples}           \\ \bottomrule
RoBERTa &7.7±4.3 &7.5±4.4 &17.3±3.3 &1.4±0.8 &6.9±2.7 &10.5±2.5 &16.7±7.1 &3.3±2.1 &9.0±3.4 \\
SpanBERT &18.2±6.7 &11.6±2.1 &19.6±3.0 &7.6±4.1 &13.3±6.0 &12.5±5.5 &15.9±4.4 &7.5±2.9 &13.3±4.3 \\
PMR & 60.3±4.0 & \textbf{56.2±3.1} & 43.6±1.7 & 30.1±3.7 & \textbf{58.2±5.0} & 46.1±4.7 & 54.2±3.4 & 31.0±1.8   & 47.5±3.4 \\ \midrule
Splinter &54.6±6.4 &18.9±4.1 &27.4±4.6 &20.8±2.7 &26.3±3.9 &24.0±5.0 &28.2±4.9 &19.4±4.6 &27.4±4.5 \\
Splinter w/ \ours &\textbf{58.9±3.6} &\textbf{35.7±1.9} &\textbf{37.6±2.8} &\textbf{31.9±1.8} &\textbf{35.2±1.6} &\textbf{34.0±6.3} &\textbf{38.7±3.6} &\textbf{37.0±5.1} &\textbf{36.1±3.3} \\ \midrule
FewshotQA &72.5±3.7 &47.1±7.6 &57.3±3.2 &44.9±4.5 &54.3±5.9 &\textbf{59.7±2.2} &\textbf{62.7±4.4} &33.1±3.2 &53.9±4.3 \\
FewshotQA w/ \ours &\textbf{73.6±3.3} &50.9±4.6 &\textbf{58.5±1.9} &\textbf{46.5±1.8} &55.4±2.7 &57.1±2.9 &57.2±2.3 &\textbf{42.2±4.1} &\textbf{55.2±2.9} \\    \toprule
\multicolumn{9}{l}{32 Examples}           \\ \bottomrule
RoBERTa &18.2±5.1 &10.5±1.8 &22.9±0.7 &3.2±1.7 &13.5±1.8 &10.4±1.9 &23.3±6.6 &4.3±0.9 &13.3±2.6 \\
SpanBERT &25.8±7.7 &15.1±6.4 &25.1±1.6 &7.2±4.6 &14.6±8.5 &13.2±3.5 &25.1±3.3 &7.6±2.3 &16.7±4.7 \\
PMR & 70.0±3.2 & \textbf{66.3±2.5} & 48.5±3.5 & 36.6±2.1 & \textbf{64.8±2.2} & 52.9±2.5 & 62.9±2.4 & 36.4±3.2   & 54.8±2.7 \\ \midrule
Splinter &59.2±2.1 &28.9±3.1 &33.6±2.4 &27.5±3.2 &34.8±1.8 &34.7±3.9 &36.5±3.2 &27.6±4.3 &35.3±3.0 \\
Splinter w/ \ours &\textbf{64.6±1.5} &\textbf{35.6±2.1} &\textbf{42.8±1.3} &\textbf{33.0±1.2} &\textbf{39.2±3.4} &\textbf{41.4±3.1} &\textbf{49.2±3.2} &\textbf{38.2±2.5} &\textbf{43.0±2.3} \\ \midrule
FewshotQA &73.8±2.2 &56.7±5.9 &\textbf{60.6±2.4} &50.0±2.8 &61.4±3.6 &61.6±1.5 &\textbf{66.9±4.7} &41.7±4.2 &59.1±3.4 \\
FewshotQA w/ \ours &\textbf{78.0±1.1} &53.5±4.0 &59.3±1.0 &\textbf{51.8±1.8} &60.3±2.6 &\textbf{61.6±3.1} &63.6±2.9 &\textbf{46.5±2.0} &\textbf{59.3±2.3} \\ \toprule
\multicolumn{9}{l}{64 Examples}           \\ \bottomrule
RoBERTa &28.4±1.7 &12.5±1.4 &24.2±1.0 &4.6±2.8 &19.8±2.4 &15.0±3.9 &34.0±1.8 &5.4±1.1 &18.0±2.0 \\
SpanBERT &45.8±3.3 &15.9±6.4 &29.7±1.5 &12.5±4.3 &18.0±4.6 &23.3±1.1 &35.3±3.1 &13.0±6.9 &24.2±3.9 \\
PMR & 71.2±2.8 & \textbf{67.1±1.8} & 51.2±3.1 & 43.2±1.8 & 66.2±1.8 & 56.3±2.0 & 68.2±1.6 & 41.8±2.3   & 58.1±2.2 \\ \midrule
Splinter &65.2±1.4 &35.5±3.7 &38.2±2.3 &\textbf{37.4±1.2} &39.8±3.6 &45.4±2.3 &49.5±3.6 &35.9±3.1 &43.4±2.7 \\
Splinter w/ \ours &\textbf{68.6±1.8} &\textbf{35.4±2.9} &\textbf{45.9±1.3} &36.1±1.7 &\textbf{44.3±3.1} &\textbf{48.6±2.3} &\textbf{59.4±2.4} &\textbf{42.6±1.6} &\textbf{47.6±2.1} \\ \midrule
FewshotQA &77.9±2.1 &57.9±4.4 &\textbf{60.9±2.5} &53.7±1.1 &65.4±2.4 &\textbf{63.1±2.2} &\textbf{73.2±3.1} &44.8±1.8 &62.1±2.5 \\
FewshotQA w/ \ours &\textbf{79.2±1.0} &55.3±3.2 &59.7±1.3 &\textbf{54.2±1.0} &\textbf{67.1±1.0} &61.1±3.0 &72.4±2.5 &\textbf{48.7±2.4} &\textbf{62.5±1.9} \\ \toprule
\multicolumn{9}{l}{128 Examples}           \\ \bottomrule
RoBERTa &43.0±7.1 &19.1±2.9 &30.1±1.9 &16.7±3.8 &27.8±2.5 &27.3±3.9 &46.1±1.4 &8.2±1.1 &27.3±3.1 \\
SpanBERT &55.8±3.7 &26.3±2.1 &36.0±1.9 &29.5±7.3 &26.3±4.3 &36.6±3.4 &52.2±3.2 &20.9±5.1 &35.4±3.9 \\
PMR & 79.8±1.8 & \textbf{68.6±1.4} & 57.4±2.6 & 52.3±1.4 & \textbf{68.5±1.8} & \textbf{65.9±1.0} & 76.8±2.1 & 45.1±1.2   & \textbf{64.3±1.7} \\ \midrule
Splinter &\textbf{72.7±1.0} &\textbf{44.7±3.9} &46.3±0.8 &\textbf{43.5±1.3} &47.2±3.5 &\textbf{54.7±1.4} &63.2±4.1 &42.6±2.5 &51.9±2.3 \\
Splinter w/ \ours &70.2±2.8 &45.4±1.3 &\textbf{51.2±1.3} &40.2±1.6 &\textbf{48.5±2.1} &54.5±2.2 &\textbf{67.8±1.6} &\textbf{44.2±2.1} &\textbf{52.8±1.9} \\ \midrule
FewshotQA &78.8±2.7 &55.2±1.8 &63.3±1.6 &56.8±1.1 &67.0±1.8 &64.9±1.8 &77.2±1.5 &46.2±5.9 &63.7±2.3 \\
FewshotQA w/ \ours &\textbf{80.5±1.4} &52.9±3.9 &\textbf{64.2±1.4} &\textbf{56.9±1.0} &68.1±1.9 &61.7±1.4 &\textbf{77.8±1.2} &\textbf{52.5±3.7} &\textbf{64.3±2.0} \\ \bottomrule
\end{tabular}}
\caption{ \textbf{Overall performance} in F1 scores across all datasets when the numbers of training examples are 16, 32, 64, and 128. NQ stands for Natural Questions. RoBERTa, SpanBERT, Splinter and Splinter w/ \ours have 110M parameters. PMR, FewshotQA and FewshotQA w/ \ours have parameters of size 406M. Comparisons with more baselines are in Section~\ref{sec:qasar} and Appendix~\ref{sec:mqa-qg}.}
\label{tab:overall_perf}
\end{table*}

\subsection{Experimental Setup}
\noindent \textbf{Datasets.}
Following Splinter~\cite{ram2021few} and FewshotQA~\cite{chada2021fewshotqa}, we sample subsets from the MRQA 2019 shared task~\cite{fisch2019mrqa} for our few-shot experiments.
Taking a closer look, there are in total eight widely used benchmark QA datasets in MRQA: SQuAD~\cite{rajpurkar2016squad}, NewsQA~\cite{trischler2017newsqa}, TriviaQA~\cite{joshi2017triviaqa}, SearchQA~\cite{dunn2017searchqa}, HotpotQA~\cite{yang2018hotpotqa}, Natural Questions~\cite{kwiatkowski2019natural}, BioASQ~\cite{tsatsaronis2015overview}, and TextbookQA~\cite{kembhavi2017you}. 
Following Splinter~\cite{ram2021few}, smaller training datasets are sampled in a logarithmic manner from the original full datasets, resulting in few-shot datasets with  16, 32, 64, and 128 training examples.

\noindent \textbf{Comparative Baselines.} We evaluate the performance of \ours against four competitive few-shot QA methods, including  \textbf{RoBERTa~}\cite{liu2019roberta}, \textbf{SpanBERT}~\cite{joshi2020spanbert}, \textbf{Splinter}~\cite{ram2021few}, \textbf{ FewshotQA}~\cite{chada2021fewshotqa}, and \textbf{PMR}~\cite{xu2023cloze}. Details of these baselines, raw text data source, and evaluation metric are in Appendix~\ref{sec:baseline_details},~\ref{sec:raw_data} and ~\ref{sec:eval_metric}, correspondingly.

\subsection{Implementation Details}
\label{sec:imple_det}
For all the models, we use the same hyperparameters during training for a fair comparison. Specifically, the models are optimized by Adam \cite{kingma2014adam} with bias corrections. The learning rate is $2 \times 10^{-5}$ without learning rate scheduling. The training batch size is set to $2$. The maximum sequence length of sequence generation is $100$ for FewshotQA and \ours. We train all the models compared for $25$ epochs. The reported results are given by the best-performing checkpoint in the development sets. For \ours, we perform a grid search for the loss weight $\lambda$ in the space $\{0.01, 0.05, 0.1, 0.5, 1.0, 10.0\}$. 
All experiments are run on NVIDIA Tesla A100-SXM4 Tensor Core GPUs with 40GB memory.

\subsection{Performance Comparison}
Table~\ref{tab:overall_perf} presents the few-shot QA performance comparison of various models across all benchmarks when provided with 16, 32, 64, and 128 training examples. BART-large serves as the backbone pre-trained language model (PLM) for FewshotQA.

The experiment was repeated five times, each with a different random seed, and we report the average and standard deviation of the results for each method. As a general observation, PMR, Splinter and FewshotQA with \ours excel over other compared methods by a respectable margin in most cases. On average, models with \ours yield better results with consistently lower variances (the rightmost column). The only exception is the 128 examples, where \ours and PMR ended in a draw. Note that FewshotQA with \ours performs better in fewer-shot cases because BART is pretrained on general domain plain texts, so \ours can apply its broad knowledge and rapidly adapt to the specifics of the QA task with just a few examples. PMR gradually catches up with more few-shot examples because its specialized training allows it to learn more efficiently from and utilize the additional examples, scaling its performance in a way that is directly relevant to the task. There are several cases in which performance degrades when using \ours. This is probably because the augmented data samples outweigh the original fine-tuning data samples for these datasets, directing the pretrained model towards the distribution of the augmented data which is slightly shifted from the distributions of the fine-tuning and test data after all. More notably, \ours exhibits less variance in results compared to FewshotQA in most cases, particularly when there are fewer training examples available.

\begin{table}[h]\centering
\scriptsize
\renewcommand{\arraystretch}{.7}
\begin{tabular}{lccc}\toprule
\textbf{Model} &\textbf{SQuAD} &\textbf{TextbookQA} \\ \midrule
\multicolumn{3}{l}{16 Examples} \\ \midrule
FewshotQA w/ \ours-random &72.0±3.5 &39.2±4.8 \\
FewshotQA w/ \ours &\textbf{73.6±3.3} &\textbf{42.2±4.1} \\ \midrule
\multicolumn{3}{l}{32 Examples} \\ \midrule
FewshotQA w/ \ours-random &75.9±1.8 &43.3±2.2 \\
FewshotQA w/ \ours &\textbf{78.0±1.1} &\textbf{46.5±2.0} \\ \midrule
\multicolumn{3}{l}{64 Examples} \\ \midrule
FewshotQA w/ \ours-random &78.6±1.3 &46.2±2.2 \\
FewshotQA w/ \ours &\textbf{79.2±1.0} &\textbf{48.7±2.4} \\ \midrule
\multicolumn{3}{l}{128 Examples} \\ \midrule
FewshotQA w/ \ours-random &79.9±1.4 &49.5±3.5 \\
FewshotQA w/ \ours &\textbf{80.5±1.4} &\textbf{52.5±3.7} \\
\bottomrule
\end{tabular}
\caption{\textbf{Ablation study.} Comparison between \ours and randomly selecting the same amount of sentences and generating training samples. }
\label{tab:ablation}
\end{table}

In digging deeper into specific models, both Splinter and FewshotQA enhanced by \ours consistently outperform their original model in terms of higher F1 scores with generally lower variances. On SQuAD, NQ, BioASQ, and TextbookQA, the performance improvements over the top baseline are relatively more substantial. Our hypothesis is that the factual statements are more concentrated in a small number of sentences, thus \ours can more effectively extract the most informative data for fine-tuning. Consequently, the influence from the is adequate to impact the primary QA task. We also observe that with the decrease in the number of few-shot QA training examples, \ours demonstrate more improvement. This is also expected since \ours essentially introduces external prior knowledge that is not present in the few-shot training examples. When the models see more actual training examples that are with the same distribution as the test set, the external knowledge helps less and even becomes noise in the extreme case. Finally, we also observe a greater improvement brought about by \ours to Splinter than to FewshotQA. This is because Splinter has a smaller model size; therefore, it naturally acquires less knowledge during the pre-train stage. Adding external knowledge to it in the form of QA benefits even more than bigger models, such as FewshotQA.

\begin{figure*}[t!]
    \centering
    \includegraphics[width=.9\linewidth]{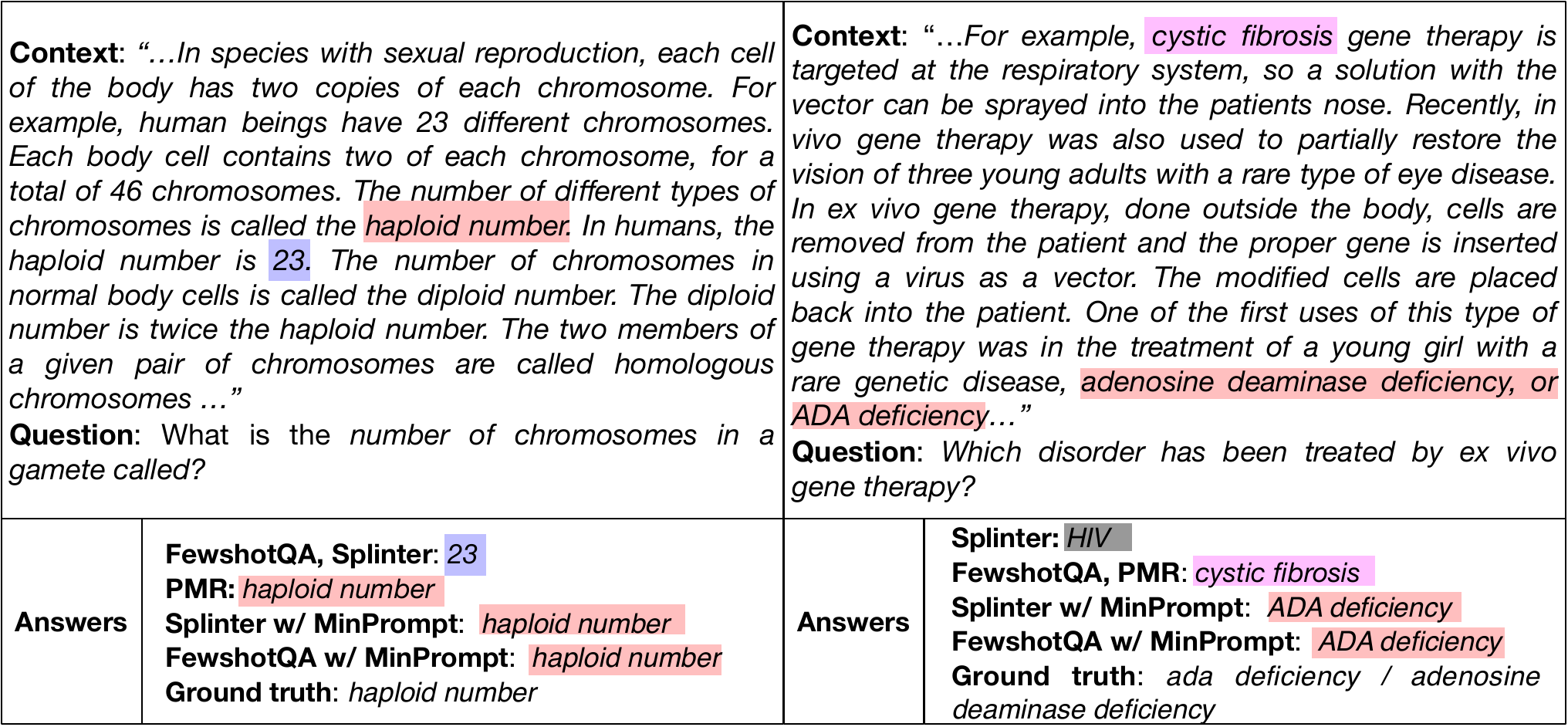}
    \caption{\textbf{Case study.} In both cases, \ours successfully generates the correct answer, whereas baselines without entity masking can not accurately recover the entity-level details. }
    \label{fig:case_study}
\end{figure*}
\subsection{Effect of Deriving the Dominating Set}
\label{sec:ablation}
To validate the necessity of deriving the dominating set of the sentence graph to keep the most informative factual sentences in the raw text, we further conduct an ablation study. We construct a variant of \ours called \ours-random where we randomly sample the same number of sentences as shown in Table~\ref{tab:aug_data} for each dataset, and then generate training samples out of these randomly sampled factual sentences. We run \ours-random and report the results on SQuAD and TextbookQA in Table~\ref{tab:ablation}. When comparing the two models, we can observe that \ours consistently perform better than \ours-random. We also observe this pattern on all the other datasets. This observation empirically validates that the dominating set derivation process indeed provides factual sentences that preserve as much information as possible about the crucial entities in the raw text.

\subsection{Case Study}
Further exploration of two specific test cases from the TextbookQA test set provides insightful results, as depicted in Figure~\ref{fig:case_study}. In the left case, both FewshotQA and Splinter without \ours yield the incorrect response, \textit{23}. Despite its semantic relevance to the accurate answer, \textit{haploid number}, the response goes overly detailed, since the value \textit{23} is specific only to human beings. This case underlines the advantage of \ours's full model, equipped with a sentence construction module anchored by entities, in deriving detailed answer text at the entity level, over FewshotQA and Splinter. In the right case, both FewshotQA and Splinter with \ours successfully identify the correct answer, whereas Splinter supplies an incorrect answer, \textit{HIV}, not even present in the context. Meanwhile, FewshotQA and PMR produced another treatment instead of what the question asks (a disorder), indicating that the question generation module of \ours improved the models' ability to deal with various kinds of questions. This comparison effectively highlights the utility of the sentence graph in forging higher-order entity interconnections within the same context. Although the baselines provide a contextually relevant response, they do not adequately address the question. The two cases substantiate the indispensable role of the sentence graph construction module and the question generation module in \ours, fortifying its capacity to delve into complex question and context semantics.

\begin{table}[!t]
    \centering
    \resizebox{\linewidth}{!}{
    \begin{tabular}{c|cccc}
         \textbf{Model} & \textbf{NQ} & \textbf{NewsQA} & \textbf{BioASQ} & \textbf{TextbookQA}  \\ \hline
        \textbf{Qasar} & 59.76 & 56.63 & 63.70 & 47.02 \\
         \textbf{Splinter w/ MinPrompt} & 51.17 & 40.22 & 67.80 & 44.24 \\
\textbf{FewshotQA w/ MinPrompt}  & \bf 64.17 & \bf 56.84 & \bf 77.84 & \bf 52.53 \\ \hline
    \end{tabular}}
    \caption{Performance of MinPrompt with 128 examples against the unsupervised domain adation method.}
    \label{tab:against_unsupervised_adaption}
\end{table}

\subsection{Comparisons against Unsupervised Domain Adaption}
\label{sec:qasar}
In addition to the few-shot approach, some studies apply unsupervised domain adpation to tackle the limitation of training data \cite{assem2021qasar}.
As an additional study, we compare with Qasar~\cite{assem2021qasar} Qasar, we focus on four overlapping datasets (i.e., NQ, NewsQA, BioASQ, and TextbookQA) between their paper and our studies as shown in Table~\ref{tab:against_unsupervised_adaption}. We can observe that FewshotQA w/ MinPrompt outperforms Qasar across four datasets from 0.4\% to 22.2\%. We also would like to emphasize that Qasar uses fine-tuning training samples ranging from 142 to 4,185 while MinPrompt using only 16 to 128 fine-tuning examples surpasses Qasar with certain disadvantages in the limited amount of fine-tuning data.

\section{Conclusion}

In this paper, we present \ours, a robust data augmentation framework that leverages a graph-based algorithm and unsupervised question generation to extract minimally meaningful QA training samples from raw text. Our contributions reside in the application of minimal data augmentation, enhancing computational efficiency and model performance while mitigating overfitting. Through extensive experiments, our model consistently outperformed competitive methods in public benchmarks, demonstrating its effectiveness.

\section*{Acknowledgements}
We thank anonymous reviewers for their valuable and insightful feedback. Research was supported in part by NIH U24DK097771, U54HG012517, NSF 1829071, 2106859, 2119643, 2200274,  2202693, and 2312501, DARPA HR00112490370, and Optum Lab. 

\section*{Limitations}
While \ours is capable of achieving comparative or better performance over existing studies, it still has some limitations as follows:
First, \ours integrates the trained NER model as part of the pipeline, so the performance of the SpaCy NER model greatly affects the overall performance of \ours.
Second, \ours uses all shared entities to construct the sentence graph. However, some entities might be more crucial than others for the downstream QA task. As a result, treating the entities differently might lead to a different result.
Lastly, the template utilized for prompt-tuning in this study still relies on manual design. Our approach is influenced by previous research that has been shown to be effective. Nevertheless, it would be intriguing to explore the development of automated methods for constructing superior prompt-tuning templates.

\section*{Ethics Statement}
This paper presents work that aims to advance the field of Natural Language Processing, specifically Large Language Models. There are potential societal consequences of our work associated with LLMs, such as AI safety and reliability. Beyond LLMs, we feel no other consequences must be highlighted here.

\bibliography{anthology,custom}

\newpage
\appendix

\section{Baseline Details}
\label{sec:baseline_details}
\begin{itemize}[leftmargin=*]
\item \textbf{RoBERTa \cite{liu2019roberta}} is a robustly optimized BERT-based PLM. It improves BERT by techniques such as training the model for a longer time, with larger batches and getting rid of the next sentence prediction task. It is known to demonstrate substantially better performance on a variety of natural language understanding tasks over BERT, including QA.

\item \textbf{SpanBERT \cite{joshi2020spanbert}} is another variant of BERT that emphasizes the encoding of spans instead of tokens. It is pretrained on two tasks: (1) masked language modeling, which is the same as BERT, and (2) span boundary prediction, which pulls the representations of the span boundary into a direction where the entire content of the masked span can be predicted correctly. SpanBERT achieves substantially better performance on span selection tasks in particular.

\item \textbf{Splinter \cite{ram2021few}} is a pretraining framework dedicated to the extractive QA task based on SpanBERT. It is pretrained by the recurring span selection task, which masks all but one instance of each recurring span and asks the model to select the correct span for each masked position.

\item \textbf{FewshotQA \cite{chada2021fewshotqa}} is the first QA-dedicated fine-tuning framework that takes advantage of pre-trained encoder-decoder models such as BART~\cite{lewis2020bart} and T5~\cite{raffel2020exploring}. In FewshotQA, the input is constructed as a concatenation of the question, a mask token as the placeholder for the answer span, and a context. Given this input, the model is fine-tuned using the same objective as its pretraining objective.

\item \textbf{PMR \cite{xu2023cloze}} constructs general-purpose machine reading comprehension training data by using Wikipedia hyperlinks and designed a Wiki Anchor Extraction task to guide the MRC-style pretraining.
\end{itemize}

\section{QA data acquisition}
\label{sec:raw_data}
The first step in our framework is to retrieve the raw text corpus as the super set from which all our prompt dataset comes. For pretraining, text corpus from general domains such as Wikipedia is commonly used. On the contrary, since we focus on the fine-tuning stage, we use domain-specific text as a starting point. Following Splinter~\cite{ram2021few} and FewshotQA~\cite{chada2021fewshotqa}, we take MRQA~\cite{fisch2019mrqa} as a benchmark to test the performance of all the comparative methods.

\begin{table*}[!t]
\renewcommand{\arraystretch}{1}
\small
    \centering
    \begin{tabular}{c|cccccccc} \hline
         \textbf{Model} & \textbf{SQuAD} & \textbf{TriviaQA} & \textbf{NQ} & \textbf{NewsQA} & \textbf{SearchQA} & \textbf{HotpotQA} & \textbf{BioASQ} & \textbf{TextbookQA} \\ \hline
         \textbf{MQA-QG} & 54.38 & 32.28 & 37.36 & 25.12 & 31.35 & 33.89 & 36.39 & 29.71 \\
         \textbf{MinPrompt} & 58.91 & 35.67 & 37.64 & 31.88 & 35.17 & 34.03 & 38.68 & 36.98 \\ \hline
    \end{tabular}
    \caption{Performance comparisons against MQA-QG.}
    \label{tab:MQA-QG}
\end{table*}

\section{Evaluation Metrics}
\label{sec:eval_metric}
Following previous studies~\cite{ram2021few,chada2021fewshotqa}, we use the F1 score as our evaluation metric. Specifically, for each sample in the test set, the predicted span and the ground truth answer are treated as bags of words, and F1 scores are applied to compute the overlap between these two sets. If there are multiple ground-truth answers to a particular question, we take the maximum of the corresponding F1 scores.

\section{Comparisons against MQA-QG}
\label{sec:mqa-qg}
Here we compare with the other few-shot data augmentation approach, MQA-QG~\cite{pan2021unsupervised}. For a fair comparison, we first run the released implementation of MQA-QG, apply their approach on Splinter, and then compare it with our method. The results of 16-shot experiments are as shown in Table~\ref{tab:MQA-QG}. We see consistent improvements derived by MinPrompt over MQA-QG, and a similar pattern is also observed in 32, 64, and 128-shot scenarios.

\section{Additional Discussions}

Here we list some additional discussions on our approach.

\subsection{Generalization Ability to Different Answer Types}
To different types of answers (e.g., why v.s. how and longanswers), we would like to mention that MinPrompt raises different types of questions based on the results of the entity typing. During this process, why / how questions would be raised once a conjunction (e.g., because) or an adverb (e.g., by) is recognized from the raw text. We agree that the why / how questions with longer answers might be less than some other types of questions like what / who / when ones in the augmented training samples, and it might cause generalization issues. An intuitive fix is to assign larger sample weights to the augmented samples with why / how questions or to repeat these samples multiple times to make different types of questions roughly be of the same number. However, the main focus of this paper is to demonstrate the idea that graph-based data selection can help the overall downstream performance, so we leave the detailed analysis for certain types of answers for future work.

\subsection{Potential Solution to Overfitting with Prompt-style Augmentation}

It could introduce an ovefit with prompt-style agumentation to the distribution of different quesetion formats as we observed in the experiments, especially for the cases with only few shot training samples. The distribution of different types of questions in the augmented data might be skewed, for example, the what / who / when questions might be more than the why / how questions. In this way, the what / who / when questions in the test set might get more precise answers than the why / how questions. The intuitive fix is to put larger sample weights to the augmented samples with why / how questions or to repeat these samples multiple times to make different types of questions roughly be of the same number.

\end{document}